\documentclass[10pt,twocolumn,letterpaper]{article}

\usepackage[pagenumbers]{cvpr} 

\usepackage{cuted} 

\usepackage{currfile} 

\usepackage{caption} 

\usepackage{amsmath,amsfonts,bm}

\def\eqref#1{equation~\ref{#1}}

\def\1{\bm{1}}

\DeclareMathAlphabet{\mathsfit}{\encodingdefault}{\sfdefault}{m}{sl}
\SetMathAlphabet{\mathsfit}{bold}{\encodingdefault}{\sfdefault}{bx}{n}

\usepackage{amsmath,amssymb,amsfonts}
\usepackage{amsthm}
\usepackage{needspace}
\usepackage{placeins}
\newtheorem{proposition}{Proposition}
\usepackage{mathtools}
\usepackage{bm}
\usepackage{algorithm}
\usepackage{algpseudocode}
\usepackage{booktabs}
\usepackage{multirow}
\usepackage{graphicx}

\definecolor{cvprblue}{rgb}{0.21,0.49,0.74}
\usepackage[pagebackref,breaklinks,colorlinks,allcolors=cvprblue]{hyperref}

\title{Learning the Autoregressive: Direct Score-Difference Estimation with Amortized Future Credit for Video Distillation}

\author{
Zixuan Duan$^{1,2,\dagger}$ \quad
Yabo Chen$^{2,\dagger}$ \quad
Xunzhi Xiang$^{1,\dagger}$ \quad
Xin Zhang$^{2,3}$ \\
Haibin Huang$^{2}$ \quad
Yin Chen$^{2}$ \quad
Chi Zhang$^{2,*}$ \quad
Qi Fan$^{1,*}$ \\
{\small $^{1}$Nanjing University \quad $^{2}$Institute of Artificial Intelligence, China Telecom (TeleAI)}\\
{\small $^{3}$Fudan University}\\
{\small $^{\dagger}$Co-first authors \quad $^{*}$Corresponding authors}
}

\begin{document}
\maketitle
\begin{abstract}
Few-step distillation for autoregressive (AR) video generation commonly first
initializes an AR student and then refines it with distribution matching
distillation (DMD). This refinement
process faces two challenges. First, although initialization already yields a
reasonably capable student, DMD obtains the refinement direction indirectly by
estimating the full student score before forming the teacher--student
correction. Second, detached-history training prevents the generator from
assigning credit to a current chunk for its influence on subsequent chunks.
We introduce Leta, a few-step distillation framework that
addresses these challenges with \mbox{\textsc{ScoreDelta}} and
\mbox{\textsc{FutureCredit}}. \mbox{\textsc{ScoreDelta}} directly estimates
the teacher--student correction from denoising residuals, while
\mbox{\textsc{FutureCredit}} learns an auxiliary cross-chunk update signal
from vector--Jacobian products (VJPs) of next-chunk teacher residuals. Leta
combines these signals into a joint teacher-guided update field in a single
forward pass during generator updates. Teacher evaluations and credit VJPs
are used only for estimator training. For \mbox{\textsc{ScoreDelta}},
we establish population equivalence to score-difference estimation and a
conditional approximation advantage under limited capacity in a fixed-feature
linear model. In our controlled experiments, Leta improves over DMD by
\(0.9\) points on VBench and \(0.66\) on VideoAlign. This advantage persists
at a higher generator update frequency (a generator-to-estimator update ratio
of \(G{:}E=1{:}1\)), where Leta improves
over DMD by \(1.6\) on VBench and \(0.35\) on VideoAlign.
Project page: \url{https://scdzx.github.io/Leta/}.
\end{abstract}

\section{Introduction}
\label{sec:introduction}

Recent advances in video generation have enabled increasingly realistic
and temporally coherent synthesis, yet efficient generation remains
a key challenge~\citep{an2025aiflowperspectivesscenarios,
shao2025aiflownetworkedge,chen2025teleworlddynamicmultimodalsynthesis,
huang2024domainfusion,
wen2025metricsolverslidinganchoredmetric,Zhang_2026_CVPR,
chen2026full4dgeneratingfullscope4d,
zhang2026physomniphysicsgroundedmultiobjectscene,
zhang2026tourphysicsbringingphysicsworld,
xiang2026videoweaveunlockinggeometricconsistency,
huang2026cineweavertrainingfreereferencecontrollablemultishot,
gu2026searchtoworldevaluation3dworld}.
Autoregressive (AR) models support incremental video synthesis by
generating successive chunks conditioned on previously generated
content~\citep{xiang2025macro,xiang2026pathwise,
wang2026directingworldfastautoregressive}.
However, generating each chunk can still require many denoising steps,
motivating the development of few-step AR
generators~\citep{duan2026uncertaintydmdrestoringdiversity,
salimans2022progressive,luo2023latent,wang2023videolcm}.
A common pipeline first initializes such generators and then refines them
through on-policy distribution matching distillation (DMD), using the
student's own generated rollouts so that the conditioning history matches
that encountered at inference~\citep{huang2026self,zhao2026causal}.
DMD guides the student with a teacher--student score difference,
providing a correction signal for distribution matching.
In this setting, two challenges remain: accurately estimating this
correction signal, and accounting for how each generated chunk
influences subsequent generation. We propose Leta to address both challenges
by directly learning the local teacher--student residual and amortizing
teacher-guided credit across chunks.
Figure~\ref{fig:leta-teaser} shows a qualitative example.

\begin{figure}[t]
\centering
\includegraphics[width=\columnwidth]{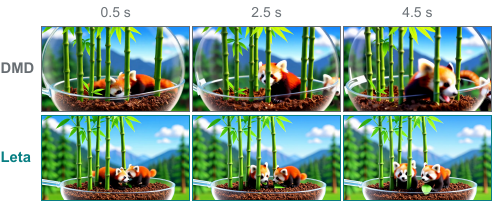}
\caption{DMD and Leta samples at matched timestamps.
Prompt: ``A realistic botanical scene in a Petri dish, where red pandas
explore a miniature bamboo forest.''}
\label{fig:leta-teaser}
\end{figure}

\begin{figure*}[t]
\centering
\includegraphics[width=0.9\textwidth]{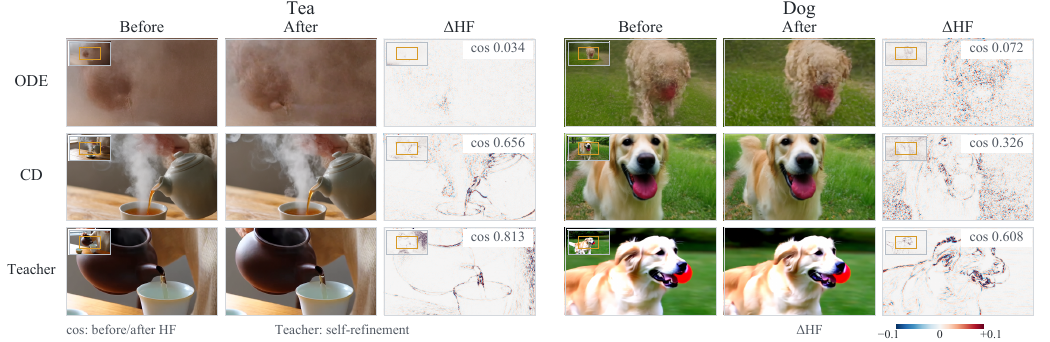}
\caption{Teacher refinement after ODE and CD initialization, with teacher
self-refinement as a reference. In these examples, CD samples show higher
before--after high-frequency (HF) cosine similarity. $\Delta$HF denotes the
HF change; thumbnail boxes mark the enlarged regions.}
\label{fig:initialization-refinement}
\end{figure*}

The first challenge is particularly important in this initialized regime.
Few-step AR students are typically initialized by distilling teacher
denoising trajectories: ODE distillation follows the teacher's probability-flow
trajectory, while Causal Forcing uses an AR teacher and Causal Forcing++ adopts
consistency distillation (CD) for a stronger initialization
\citep{yin2025slow,huang2026self,zhu2026causal,song2023consistency,zhao2026causal}.
To measure how much refinement remains, Figure~\ref{fig:initialization-refinement}
applies the same teacher refinement to ODE- and CD-initialized samples, using
teacher self-refinement as a reference, and compares their before--after
high-frequency (HF) cosine similarity. CD samples change less under refinement,
indicating a stronger starting point and a smaller residual gap. Yet DMD still
estimates the full student score, so estimation errors may mask the subtle
correction needed for further refinement.

To focus learning on the subtle corrections left after initialization, we
propose \textsc{ScoreDelta} to estimate the teacher--student residual directly.
Figure~\ref{fig:leta-concept} (left) contrasts this direct gap estimation
with estimating the full student score before subtraction.
Given a noisy student rollout, the estimator regresses the teacher's denoised
prediction minus the original clean rollout, without separately estimating the full
student score. Proposition~\ref{prop:scoredelta-equivalence} proves that, for
a fixed student distribution and an optimal teacher, the population-optimal
residual estimator recovers the same teacher--student score difference used
by DMD, up to a known timestep scaling. In a fixed-feature linear setting,
Proposition~\ref{prop:finite-capacity} further shows that direct residual
estimation can achieve lower approximation error under the same capacity
constraint when the residual is simpler to represent.

Estimating the score difference is only part of forming an AR generator
update. Each generated chunk also becomes an input to later chunks, so
changing it affects subsequent generation. Backpropagating through the full
rollout incurs a large memory cost, so training typically detaches the
generated history. Although DMD's bidirectional teacher and fake-score model
can use future context, their score predictions do not restore these
cross-chunk gradient paths in the generator. We introduce
\textsc{FutureCredit} to provide teacher-guided auxiliary updates from the next
chunk, augmenting the current chunk's local correction as illustrated
conceptually in Figure~\ref{fig:leta-concept} (right).
A Vector--Jacobian Product (VJP) propagates the next chunk's
teacher residual through a replay of the generator's final denoising step,
providing an auxiliary target. We train an estimator to predict the processed target from the noisy
rollout, making this guidance available through a forward pass during generator
updates.

Together, \textsc{ScoreDelta} and \textsc{FutureCredit} form Leta. On few-step
text-to-video distillation with Wan~\citep{wan2025wan}, Leta improves over DMD
on VBench~\citep{huang2023vbench} Overall and VideoAlign~\citep{liu2026improving}
visual and motion scores. It also achieves higher visual and motion scores than
released
RAVEN~\citep{lu2026ravenrealtimeautoregressivevideo} and
DistillAlign~\citep{li2026distillaligncoordinatingmodecovering} checkpoints,
with competitive VBench performance. The comparisons to the released
checkpoints are reference comparisons, since those checkpoints are not
retrained under our controlled protocol.
Controlled ablations show gains from both components.

Our contributions are threefold:
\begin{itemize}
\item \textbf{Theoretical analysis.}
We provide a theoretical analysis of direct residual estimation,
establishing its connection to DMD's score difference and identifying
conditions under which it offers lower approximation error under
limited estimator capacity.

\item \textbf{\textsc{ScoreDelta}: local correction field.}
We learn the teacher--student residual directly from noisy student rollouts,
focusing the update-field estimator on the remaining discrepancy to support
effective refinement of a well-initialized AR generator.

\item \textbf{\textsc{FutureCredit}: cross-chunk credit field.}
We amortize next-chunk teacher-residual VJPs into a learned auxiliary signal,
allowing generator updates to incorporate future influence under
detached-history training.
\end{itemize}

\begin{figure*}[t]
\centering
\includegraphics[width=\textwidth]{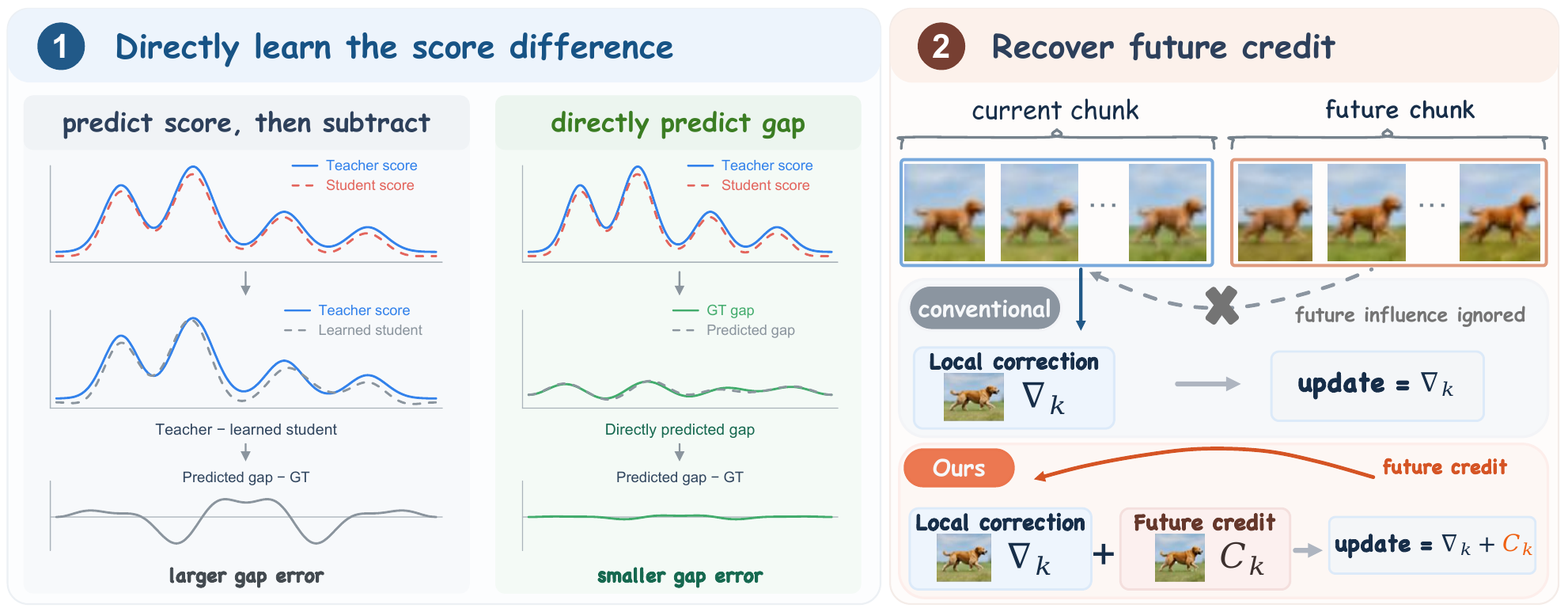}
\caption{Conceptual overview of Leta. \textsc{ScoreDelta} directly estimates
the teacher--student correction, which can be easier to approximate than the
full student score. \textsc{FutureCredit} augments the local correction with
learned guidance from next-chunk teacher-residual VJPs.}
\label{fig:leta-concept}
\end{figure*}

\section{Related Works}

\noindent{\bf Few-Step Distillation.}
Few-step distillation reduces the sampling cost of diffusion models~\citep{lin2026autoregressive,chen2025sana,sauer2024adversarial,geng2026mean,duan2026uncertaintydmdrestoringdiversity}.
Progressive distillation and consistency-based methods further reduce the
number of sampling steps for image and video diffusion models
\citep{salimans2022progressive,luo2023latent,wang2023videolcm}.
Consistency models learn consistent predictions along probability-flow
trajectories~\citep{song2023consistency}, while DMD and DMD2 train
generators through distribution matching using teacher--student score
differences~\citep{yin2024one,yin2024improved}.
rCM combines continuous-time consistency distillation with score
regularization~\citep{zheng2026large},
while AnyFlow uses on-policy flow-map distillation to support
flexible sampling budgets~\citep{gu2026anyflow}.
For AR video generation, Causal Forcing and Causal Forcing++ improve
student initialization through causal ODE and consistency
distillation~\citep{song2023consistency,zhu2026causal,zhao2026causal}.
DistillAlign further combines mode-seeking distribution matching
with a mode-covering consistency
constraint~\citep{li2026distillaligncoordinatingmodecovering}.
Our \textsc{ScoreDelta} focuses on the refinement signal itself,
directly learning the teacher--student correction from noisy student
rollouts.

\noindent{\bf Autoregressive Video Generation.}
Autoregressive video models generate successive chunks conditioned
on previously generated content~\citep{xiang2025macro,xiang2026pathwise,liu2026rolling,yang2026longlive,chen2025teleworlddynamicmultimodalsynthesis,wang2026directingworldfastautoregressive,xiang2026videoweaveunlockinggeometricconsistency,huang2026cineweavertrainingfreereferencecontrollablemultishot,henschel2025streamingt2v,sun2025ar,zhai2025stargen,liu2026streaming}.
CausVid distills a bidirectional diffusion teacher into a few-step
causal generator~\citep{yin2025slow}, while Self Forcing addresses
the training--inference mismatch by training on the student's own
generated histories~\citep{huang2026self}.
RAVEN~\citep{lu2026ravenrealtimeautoregressivevideo} and Self Gradient
Forcing~\citep{zhuang2026self} further enable future losses to supervise historical
context representations.
More broadly, synthetic gradients learn update signals to decouple
neural-network training~\citep{jaderberg2017decoupled}.
Our \textsc{FutureCredit} learns auxiliary guidance from next-chunk
teacher-residual VJPs, making cross-chunk information available
through an estimator forward pass during generator updates with
detached history.

\section{Background}
\label{sec:background}

We first introduce AR generation and self-rollouts, then review on-policy
distribution matching and DMD's alternating updates. 

\paragraph{Autoregressive generation and self-rollouts.}
Let \(P_T(\mathbf{z}\mid c)\) and \(P_G(\mathbf{z}\mid c)\) denote the
distributions represented by the frozen bidirectional teacher and the
autoregressive student, respectively. Conditioned on a prompt \(c\), the AR
student factorizes a video into \(N\) sequential chunks:
\begin{equation}
P_G(\mathbf{z}\mid c)
=\prod_{k=1}^{N}P_G(\mathbf{z}_k\mid\mathbf{z}_{<k},c),
\label{eq:ar-factorization}
\end{equation}
where \(\mathbf{z}_{<k}=(\mathbf{z}_1,\ldots,\mathbf{z}_{k-1})\), with an
empty prefix for \(k=1\). In a self-rollout, each chunk is conditioned on
the student's own generated history:
\begin{equation}
\begin{aligned}
\mathbf{z}_k &\sim P_G(\cdot\mid\mathbf{z}_{<k},c),
\qquad k=1,\ldots,N, \\
\mathbf{z} &=(\mathbf{z}_1,\ldots,\mathbf{z}_N)
\sim P_G(\cdot\mid c).
\end{aligned}
\label{eq:self-rollout}
\end{equation}
The complete rollout is treated as a single video sample for distribution
matching. We use \(\mathbf{z}\) and \(\mathbf{x}_t\) for this on-policy generator
rollout and its post-generation corruption; a separate \(G\) superscript on
sample variables is unnecessary. The \(G\) subscripts in \(P_G\), \(p_{G,t}\),
and \(s_G\) still identify the student distribution.

\paragraph{On-policy distribution matching.}
At noise level \(t\), write the noisy rollout as
\(\mathbf{x}_t=(\mathbf{x}_{1,t},\ldots,\mathbf{x}_{N,t})\), where
\begin{equation}
\begin{aligned}
\mathbf{x}_t
&=
\alpha_t\mathbf{z}
+
\sigma_t\boldsymbol{\epsilon},
\qquad
\boldsymbol{\epsilon}
\sim
\mathcal{N}(\mathbf{0},\mathbf{I}), \\
\mathbf{x}_{k,t}
&=[\mathbf{x}_t]_k
=\alpha_t\mathbf{z}_k+\sigma_t\boldsymbol{\epsilon}_k.
\end{aligned}
\end{equation}
Here \([\cdot]_k\) selects the \(k\)-th chunk, and all chunks share the same
\(t\). This \(t\) indexes post-generation corruption for distribution
matching, distinct from the generator's internal denoising-step index.
The corruption induces the noised student marginal
\(p_{G,t}(\mathbf{x}_t\mid c)\). The frozen diffusion teacher analogously
defines a noised target marginal \(p_{T,t}(\mathbf{x}_t\mid c)\).

At each timestep, distribution matching minimizes the reverse KL divergence
from the student marginal to the teacher marginal:
\begin{equation}
\begin{aligned}
\mathcal{L}_{\mathrm{DM}}^{(t)}
&=\mathrm{KL}\!\left(
p_{G,t}(\cdot\mid c)
\,\middle\|\,
p_{T,t}(\cdot\mid c)
\right) \\
&=\mathbb{E}_{\mathbf{x}_t\sim p_{G,t}}
\Bigl[
\log p_{G,t}(\mathbf{x}_t\mid c)
 -\log p_{T,t}(\mathbf{x}_t\mid c)
\Bigr].
\end{aligned}
\end{equation}
Because the noisy samples are constructed from complete rollouts of the current
student, this is an on-policy objective. Its generator update is governed by
the difference between the teacher and student scores,
where
\begin{equation}
\begin{aligned}
s_T(\mathbf{x}_t,t,c)
&=\nabla_{\mathbf{x}_t}\log p_{T,t}(\mathbf{x}_t\mid c),\\
s_G(\mathbf{x}_t,t,c)
&=\nabla_{\mathbf{x}_t}\log p_{G,t}(\mathbf{x}_t\mid c).
\end{aligned}
\end{equation}
With strong CD initialization, the teacher--student gap can already be small.
Figure~\ref{fig:leta-concept} (left) conceptually illustrates how shared
score structure cancels in the difference, making accurate estimation of the
remaining residual central to further refinement.

\paragraph{DMD training.}
DMD alternates between updating an online fake-score model and updating the
generator. In the fake-score update, the current autoregressive student first
produces complete self-forced rollouts
\(\mathbf{z} \sim P_{G_\theta}(\mathbf{z}\mid c)\).
These on-policy samples are perturbed to obtain
\(\mathbf{x}_t\), and are treated as fixed targets when training the
fake-score model. In clean-prediction notation, this update can be written as
\begin{equation}
\mathcal{L}_{\mathrm{fake}}
=
\mathbb{E}\!\left[
\left\|
D_\psi^G(\mathbf{x}_t,t,c)
-
\operatorname{sg}[\mathbf{z}]
\right\|_2^2
\right],
\end{equation}
where \(D_\psi^G\) denotes the online fake denoiser. At its population optimum,
\(D_\psi^G\) predicts the conditional mean of student samples given their
noisy observations and therefore induces the student score
\(s_G(\mathbf{x}_t,t,c)\). Since the student distribution changes throughout
training, the fake-score model is updated continually on rollouts from the
current student.

In the generator update, both the frozen teacher and the online fake-score
model are held fixed. Fresh on-policy student rollouts are generated, and the
two models are evaluated at the corresponding noisy samples to obtain
\(s_T(\mathbf{x}_t,t,c)\) and \(s_G(\mathbf{x}_t,t,c)\). A
stop-gradient surrogate for the generator update is
\begin{equation}
\mathcal{L}_{G}^{\mathrm{DMD}}
=
\mathbb{E}\!\left[
\left\langle
\operatorname{sg}\!\left[
s_G(\mathbf{x}_t,t,c)
-
s_T(\mathbf{x}_t,t,c)
\right],
\mathbf{x}_t
\right\rangle
\right],
\end{equation}
whose gradient is
\begin{equation}
\nabla_\theta\mathcal{L}_{G}^{\mathrm{DMD}}
=
\mathbb{E}\!\left[
\left(\frac{\partial\mathbf{x}_t}{\partial\theta}\right)^{\!\top}
\left(
s_G(\mathbf{x}_t,t,c)
-
s_T(\mathbf{x}_t,t,c)
\right)
\right].
\end{equation}
In practice, autoregressive training caches the history as fixed key-value pairs,
so gradients propagate only within the current chunk.
This omits the gradient credit for the current chunk's influence on future generation.

\section{Learning the Autoregressive}
\label{sec:ar-update}

Leta learns a joint teacher-guided update field for autoregressive refinement.
A shared bidirectional estimator predicts \textsc{ScoreDelta}, a local
correction, and \textsc{FutureCredit}, a cross-chunk credit signal. Their
complementary targets are optimized together, and their fused direction guides
the generator in one estimator forward pass.
Figure~\ref{fig:method-overview} summarizes this alternating training process.

\begin{figure*}[t]
\centering
\includegraphics[width=\textwidth]{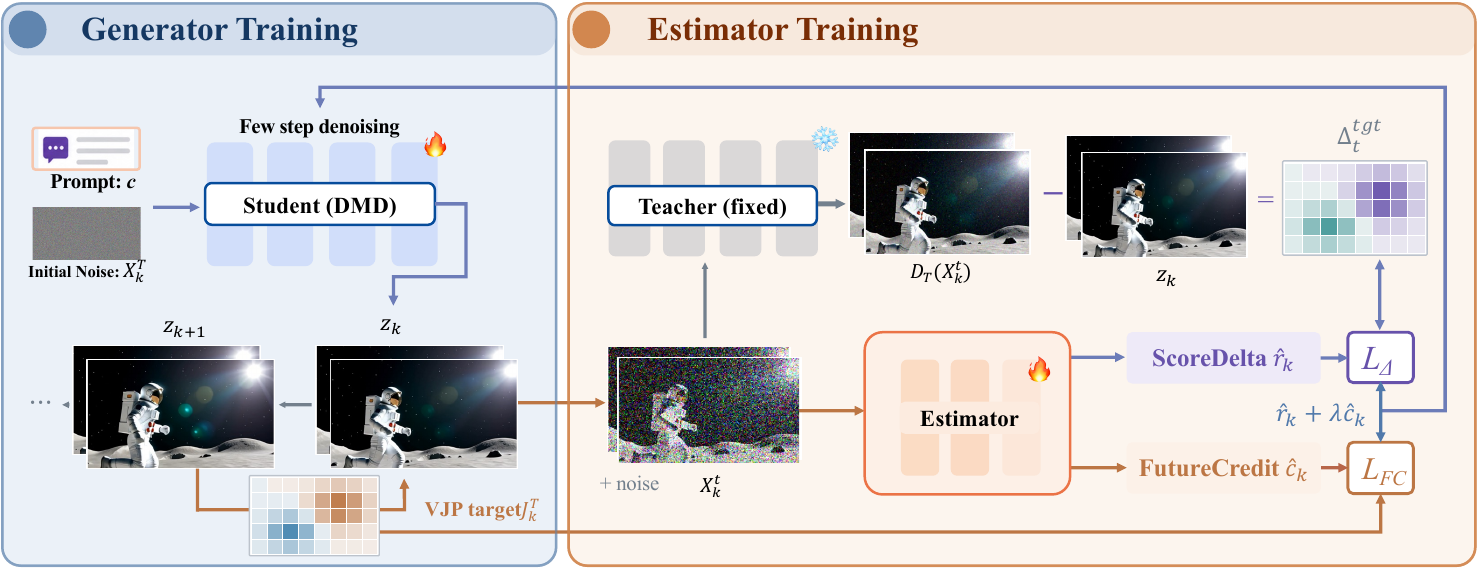}
\caption{Two alternating training phases of Leta. Left: the generator
combines the frozen joint estimator's \textsc{ScoreDelta} correction and
\textsc{FutureCredit} cross-chunk credit. Right: the joint estimator learns a
structured update field under complementary supervision from teacher residuals
and next-chunk VJP targets. Teacher evaluations and VJPs are used only during
estimator training.}
\label{fig:method-overview}
\end{figure*}

\paragraph{\textsc{ScoreDelta}: Local correction field.}
Let \(D_T(\mathbf{x}_t,t,c)\) denote the clean prediction of the frozen
bidirectional teacher. Given a complete on-policy rollout
\(\mathbf{z}\) from the current student and its noisy observation
\(\mathbf{x}_t\), we define the stochastic denoising residual
\begin{equation}
\Delta_t^{\mathrm{tgt}}
=
D_T(\mathbf{x}_t,t,c)-\mathbf{z}.
\end{equation}
The \textsc{ScoreDelta} estimator \(R_\phi\) is learned by regressing this
teacher--student residual:
\begin{equation}
\mathcal{L}_{\Delta}
=
\mathbb{E}
\left[
\left\|
R_\phi(\mathbf{x}_t,t,c)
-
\operatorname{sg}
\left[
\Delta_t^{\mathrm{tgt}}
\right]
\right\|_2^2
\right].
\label{eq:scoredelta-estimator}
\end{equation}
We use the superscript \(\mathrm{tgt}\) for supervision targets and hats for
estimator predictions. The chunk target is
\(\Delta_{k,t}^{\mathrm{tgt}}=[\Delta_t^{\mathrm{tgt}}]_k\).
Unlike a conventional denoiser, \(R_\phi\) is not trained to reconstruct a
complete clean video or to model the student score independently. It directly
predicts the local correction used by the generator update.

\paragraph{Connection to the score difference.}
\begin{proposition}[Population equivalence]
\label{prop:scoredelta-equivalence}
Fix the student distribution and \((t,c)\), with \(\alpha_t,\sigma_t>0\).
Assume \(D_T\) is the optimal teacher denoiser and that \(\mathbf{z}\)
and \(D_T(\mathbf{x}_t,t,c)\) are square-integrable under student sampling.
Let \(D_G^*=\mathbb{E}[\mathbf{z}\mid\mathbf{x}_t,t,c]\) and let
\(R^*\) minimize~\eqref{eq:scoredelta-estimator}, conditioned on \((t,c)\),
over square-integrable functions. Then, almost surely under \(p_{G,t}\),
\begin{equation}
s_T-s_G
=\frac{\alpha_t}{\sigma_t^2}R^*
=\frac{\alpha_t}{\sigma_t^2}\bigl(D_T-D_G^*\bigr),
\label{eq:scoredelta-score-connection}
\end{equation}
where all functions are evaluated at \((\mathbf{x}_t,t,c)\).
\end{proposition}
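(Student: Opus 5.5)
The argument has three steps: identify $R^*$ as a conditional expectation, convert both denoisers into scores with Tweedie's formula, and subtract. All expectations below are conditional on $(t,c)$ and taken under student sampling: $\mathbf{z}\sim P_G(\cdot\mid c)$ and $\mathbf{x}_t=\alpha_t\mathbf{z}+\sigma_t\boldsymbol{\epsilon}$.

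\emph{Step 1: the regression optimum.} The target $\Delta_t^{\mathrm{tgt}}=D_T(\mathbf{x}_t,t,c)-\mathbf{z}$ is square-integrable, because both terms are by assumption. The $L^2$-projection characterization of conditional expectation then says that the minimizer of~\eqref{eq:scoredelta-estimator} over square-integrable functions of $\mathbf{x}_t$ is $R^*=\mathbb{E}[\Delta_t^{\mathrm{tgt}}\mid\mathbf{x}_t]$, unique $p_{G,t}$-a.s. Since $D_T(\mathbf{x}_t,t,c)$ is a measurable function of $\mathbf{x}_t$, it comes out of the conditional expectation. This gives $R^*=D_T-\mathbb{E}[\mathbf{z}\mid\mathbf{x}_t]=D_T-D_G^*$ a.s., which is the second equality in~\eqref{eq:scoredelta-score-connection}.

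\emph{Step 2: Tweedie's formula for each model.} For the Gaussian corruption $\mathbf{x}_t=\alpha_t\mathbf{z}+\sigma_t\boldsymbol{\epsilon}$, Tweedie's identity states that the marginal score of any clean distribution satisfies
\begin{equation*}
\nabla_{\mathbf{x}_t}\log p_t(\mathbf{x}_t)=\frac{\alpha_t\,\mathbb{E}[\mathbf{z}\mid\mathbf{x}_t]-\mathbf{x}_t}{\sigma_t^2}.
\end{equation*}
To justify this, write $p_t(\mathbf{x})=\int \mathcal{N}(\mathbf{x};\alpha_t\mathbf{z},\sigma_t^2\mathbf{I})\,dP(\mathbf{z})$ and differentiate under the integral. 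Differentiation is legitimate because, for $\sigma_t>0$, $\mathbf{x}\mapsto p_t(\mathbf{x})$ is smooth and strictly positive, and the Gaussian kernel's derivatives are dominated locally uniformly in $\mathbf{x}$. This requires only that $P$ is a probability measure, which also covers the first chunk with its empty prefix. Applying the identity to the student distribution gives $s_G=(\alpha_t D_G^*-\mathbf{x}_t)/\sigma_t^2$. Applying it to the teacher, with $D_T$ the optimal teacher denoiser, i.e. $D_T=\mathbb{E}_{P_T}[\mathbf{z}\mid\mathbf{x}_t]$, gives $s_T=(\alpha_t D_T-\mathbf{x}_t)/\sigma_t^2$.

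\emph{Step 3: subtract.} Subtracting, the $\mathbf{x}_t$ terms cancel and
\begin{equation*}
s_T-s_G=\frac{\alpha_t}{\sigma_t^2}\bigl(D_T-D_G^*\bigr),
\end{equation*}
which combined with Step 1 gives the first equality in~\eqref{eq:scoredelta-score-connection}.

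The main obstacle is the \emph{almost surely} qualifier. The Tweedie identities hold pointwise everywhere, since both noised densities are positive. However, $R^*$ and $D_G^*$ are defined only up to $p_{G,t}$-null sets, as versions of conditional expectations. To close this gap, I would fix the canonical version of $D_G^*$ given by the Tweedie integral formula, observe that $R^*$ agrees with $D_T-D_G^*$ off a $p_{G,t}$-null set, and conclude that~\eqref{eq:scoredelta-score-connection} holds $p_{G,t}$-almost surely.
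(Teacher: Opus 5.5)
Your proposal is correct and follows essentially the same route as the paper: it identifies $R^*$ as the conditional mean $D_T-D_G^*$ via the $L^2$ (Pythagorean) risk decomposition, then applies Tweedie's formula to both noised marginals and cancels the common $-\mathbf{x}_t/\sigma_t^2$ term. Your paragraph on fixing the canonical Tweedie version of $D_G^*$ is a more explicit treatment of the almost-sure qualifier, which the paper states only as uniqueness up to sets of zero student probability.
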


The proof, the extension to approximate teachers, and the local DMD-gradient
equivalence are given in Appendix~\ref{app:population-equivalence}.

\paragraph{Why direct residual estimation can help at finite capacity.}
Population equivalence does not imply equal approximation error for limited
estimators. If teacher and student share structure, their difference may
require less capacity than the complete student denoiser. We compare the
two denoising losses in clean-prediction units, using the same fixed input
features and linear bottleneck capacity. The indirect method learns a student denoiser \(f\)
and forms \(D_T-f\); \textsc{ScoreDelta} learns the residual directly.

\Needspace{16\baselineskip}
\begin{proposition}[Conditional finite-capacity advantage]
\label{prop:finite-capacity}
Fix \((t,c)\) and a student distribution with finite second moments.
All expectations below use \(\mathbf{x}=\mathbf{x}_t\sim p_{G,t}(\cdot\mid c)\).
Suppose a shared,
fixed feature map \(\mathbf{u}(\mathbf{x})\in\mathbb{R}^m\) satisfies
\(\mathbb{E}[\mathbf{u}\mathbf{u}^{\top}]=I_m\), and
\(D_G^*(\mathbf{x})=A_G\mathbf{u}(\mathbf{x})\),
\(D_T(\mathbf{x})=A_T\mathbf{u}(\mathbf{x})\), with
\(A_G,A_T\in\mathbb{R}^{d\times m}\). Let
\(A_\Delta=A_T-A_G\) and
\(\mathcal{F}_\kappa=\{\mathbf{x}\mapsto M\mathbf{u}(\mathbf{x}):
\operatorname{rank}(M)\leq \kappa\}\), for a rank budget
\(0\leq \kappa\leq\min(d,m)\).
The smallest errors for estimating the same residual \(R^*=D_T-D_G^*\)
are
\begin{equation}
\begin{aligned}
\mathcal{E}_{\mathrm{ind}}^*(\kappa)
&:=\inf_{f\in\mathcal{F}_\kappa}
\mathbb{E}\|D_T-f-R^*\|_2^2
=\sum_{j>\kappa}\sigma_j(A_G)^2, \\
\mathcal{E}_{\Delta}^*(\kappa)
&:=\inf_{R\in\mathcal{F}_\kappa}
\mathbb{E}\|R-R^*\|_2^2
=\sum_{j>\kappa}\sigma_j(A_\Delta)^2,
\end{aligned}
\label{eq:capacity-risks}
\end{equation}
where \(\sigma_j(A)\) are the singular values in decreasing order.
In particular, if
\(\operatorname{rank}(A_\Delta)\leq \kappa<\operatorname{rank}(A_G)\), then
\(\mathcal{E}_{\Delta}^*(\kappa)=0<\mathcal{E}_{\mathrm{ind}}^*(\kappa)\).
\end{proposition}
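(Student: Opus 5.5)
The approach is to reduce both infima to a single low-rank matrix approximation problem in Frobenius norm and then apply the Eckart--Young--Mirsky theorem.

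\textbf{Reduction for the indirect estimator.} Take any \(f=M\mathbf{u}\in\mathcal{F}_\kappa\). The error of the indirect estimate is
\[
D_T-f-R^*=D_G^*-f=(A_G-M)\mathbf{u}.
\]
For a fixed matrix \(B\in\mathbb{R}^{d\times m}\), the cyclic trace identity and the whitening assumption \(\mathbb{E}[\mathbf{u}\mathbf{u}^\top]=I_m\) give
\[
\mathbb{E}\|B\mathbf{u}\|_2^2=\operatorname{tr}\!\bigl(B\,\mathbb{E}[\mathbf{u}\mathbf{u}^\top]B^\top\bigr)=\|B\|_F^2 .
\]
Finiteness is automatic here, since \(\mathbb{E}\|\mathbf{u}\|^2=m\). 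Hence
\[
\mathcal{E}^*_{\mathrm{ind}}(\kappa)=\inf_{\operatorname{rank}(M)\le\kappa}\|A_G-M\|_F^2 .
\]

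\textbf{Reduction for the direct estimator.} Take any \(R=M\mathbf{u}\in\mathcal{F}_\kappa\). Since \(R^*=D_T-D_G^*=A_\Delta\mathbf{u}\), we have \(R-R^*=(M-A_\Delta)\mathbf{u}\). The same identity gives
\[
\mathcal{E}^*_\Delta(\kappa)=\inf_{\operatorname{rank}(M)\le\kappa}\|A_\Delta-M\|_F^2 .
\]
The optimization is over the matrix \(M\), not over functions. This is legitimate because \(\mathcal{F}_\kappa\) is parametrized by \(M\), and the objective depends on \(M\) only through \(\|A-M\|_F\). Distinct matrices may induce the same function if \(\mathbf{u}\) is degenerate, but whitening rules this out, since \(M\mathbf{u}=M'\mathbf{u}\) a.s.\ forces \(\|M-M'\|_F=0\).

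\textbf{Applying Eckart--Young--Mirsky.} For any \(A\) with singular value decomposition \(A=\sum_j\sigma_j\mathbf{a}_j\mathbf{b}_j^\top\), the theorem states that
\[
\min_{\operatorname{rank}(M)\le\kappa}\|A-M\|_F^2=\sum_{j>\kappa}\sigma_j(A)^2,
\]
attained by the truncated SVD. This applies for every \(0\le\kappa\le\min(d,m)\), including \(\kappa=0\), where the only admissible \(M\) is \(0\) and the value is \(\|A\|_F^2\). So the infima are attained and equal the stated tails. Applying this with \(A=A_G\) and \(A=A_\Delta\) yields both formulas.

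\textbf{The final claim.} Suppose \(\operatorname{rank}(A_\Delta)\le\kappa\). Then \(\sigma_j(A_\Delta)=0\) for all \(j>\kappa\), so \(\mathcal{E}_\Delta^*(\kappa)=0\). Now suppose \(\kappa<\operatorname{rank}(A_G)\). Then \(\sigma_{\kappa+1}(A_G)>0\), so \(\mathcal{E}^*_{\mathrm{ind}}(\kappa)\ge\sigma_{\kappa+1}(A_G)^2>0\).

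No step is a real obstacle, since the content is Eckart--Young--Mirsky, which can be cited. The step that most needs care is the reduction from the stochastic \(L^2\) risk to a Frobenius norm, and specifically that the whitening assumption is used precisely there. One should also note that the student's finite second moments ensure \(D_G^*\) is square-integrable, so both risks are finite.
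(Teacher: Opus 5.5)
Your proof is correct and follows the paper's argument. Both use the pointwise identity \((D_T-f)-R^*=D_G^*-f\), the whitening step \(\mathbb{E}\|B\mathbf{u}\|_2^2=\|B\|_F^2\), and the best rank-\(\kappa\) Frobenius approximation applied to \(A_G\) and \(A_\Delta\); the only difference is that you cite Eckart--Young--Mirsky, whereas the paper rederives it by projecting onto the column space of \(M\) and bounding \(\operatorname{tr}(\Pi AA^{\top})\) by the top \(\kappa\) eigenvalues of \(AA^{\top}\).
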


These errors exclude the common irreducible denoising loss and establish a
conditional approximation advantage in a fixed-feature linear model. This is a
conditional representation result rather than a general guarantee for
arbitrary neural estimators. The spiral toy
example in Figure~\ref{fig:spiral-score-difference} illustrates how cancellation
of shared score structure can benefit direct difference prediction under a
limited feature budget. It is a conditional fixed-density score-regression
illustration, rather than a test of the rank condition or of the full denoising
training procedure. Proofs and experimental details are in
Appendix~\ref{app:finite-capacity} and~\ref{app:spiral-example}.

\begin{figure}[htbp]
\centering
\includegraphics[width=\linewidth]{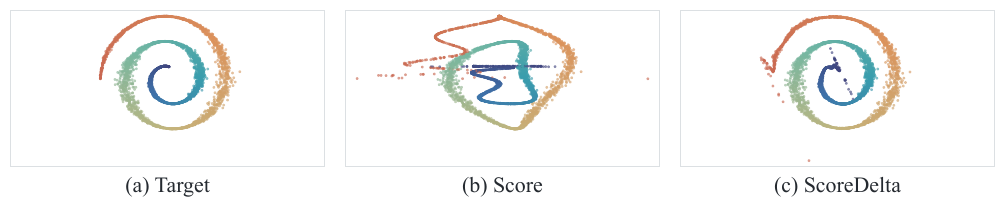}
\caption{Conditional fixed-density score-regression illustration. With matched
trainable parameter budgets,
direct score-difference prediction (c) better preserves the target geometry
(a) than estimating the student score and subtracting it from the teacher
score (b).}
\label{fig:spiral-score-difference}
\end{figure}

\paragraph{Local autoregressive credit.}
Detaching generated history omits how a chunk influences later generation.
To construct an auxiliary signal from this dependence, we replay the next
chunk's final denoising step with its saved noisy input, timestep, and
generator parameters fixed.
Let \(J_k^{\mathrm{rep}}\) be the Jacobian of this clean prediction with
respect to the current history chunk \(\mathbf{z}_k\).

\paragraph{\textsc{FutureCredit}: Cross-chunk credit field.}
For \(k<N\), we propagate the same
stochastic teacher residual \(\Delta_{k+1,t}^{\mathrm{tgt}}
=[D_T(\mathbf{x}_t,t,c)-\mathbf{z}]_{k+1}\) used by \textsc{ScoreDelta}
back to the current chunk:
\begin{equation}
\tilde{\mathbf c}_{k,t}^{\mathrm{VJP}}
=(J_k^{\mathrm{rep}})^{\top}
\operatorname{sg}[\Delta_{k+1,t}^{\mathrm{tgt}}].
\label{eq:future-credit-target}
\end{equation}
This VJP transfers a sample-wise teacher correction to the current chunk. We
use it as a local credit target for a frozen-teacher reconstruction surrogate;
it is a biased approximation to cross-chunk distribution-matching credit, and
we do not claim exact KL-gradient recovery. Thus, \textsc{FutureCredit} is an
auxiliary teacher-guided signal whose effectiveness depends on the local replay
approximation and the mixing weight \(\lambda\).
The \textsc{FutureCredit} estimator \(C_\psi\) predicts a processed version
\(c_{k,t}^{\mathrm{tgt}}\) of this signal to guide generator updates:
\begin{equation}
\mathcal{L}_{\mathrm{FC}}
=\mathbb{E}
\left[\left\|
[C_\psi(\mathbf{x}_t,t,c)]_k
-\operatorname{sg}[c_{k,t}^{\mathrm{tgt}}]\right\|_2^2\right].
\label{eq:future-credit-estimator}
\end{equation}
The two signals are produced by a shared bidirectional estimator and optimized
together under complementary supervision. Target processing and loss sampling are detailed in
Appendix~\ref{app:future-credit-implementation}. Teacher evaluations and VJPs
are used only for estimator training.

\paragraph{Generator update.}
The frozen joint estimator produces both signals in one forward pass.
Write \(\widehat{r}_{k,t}=[R_\phi(\mathbf{x}_t,t,c)]_k\) and
\(\widehat{c}_{k,t}=[C_\psi(\mathbf{x}_t,t,c)]_k\), setting
\(\widehat{c}_{N,t}=0\). We combine them as
\begin{equation}
b_{k,t}=\widehat{r}_{k,t}+\lambda\widehat{c}_{k,t}.
\label{eq:joint-generator-update}
\end{equation}
After scale matching (Appendix~\ref{app:future-credit-implementation}), the
chunk directions form \(\widehat{\mathbf{g}}_t\). The generator retains
gradients only at each chunk's final denoising step, with detached history:
\begin{equation}
\mathcal{L}_{G}
=
-
\mathbb{E}
\left[
w_t
\left\langle
\operatorname{sg}
\left[
\widehat{\mathbf{g}}_t
\right],
\mathbf{z}
\right\rangle
\right].
\label{eq:ar-update-generator}
\end{equation}
The \textsc{ScoreDelta}-only update's connection to DMD is derived in
Appendix~\ref{app:local-gradient-equivalence}.

\begin{algorithm}[H]
\caption{Leta training overview}
\label{alg:leta-training}
\small
\begin{algorithmic}[1]
\Require Generator \(G_\theta\), frozen teacher \(D_T\), and
joint estimator \((R_\phi,C_\psi)\).
\While{training}
    \State Sample \(c,t,\boldsymbol{\epsilon}\) and generate a fresh rollout \(\mathbf{z}\).
    \State \(\mathbf{x}_t \gets \alpha_t\mathbf{z}+\sigma_t\boldsymbol{\epsilon}\).
    \If{an estimator update is scheduled}
        \State \(\Delta_t^{\mathrm{tgt}} \gets D_T(\mathbf{x}_t,t,c)-\mathbf{z}\).
        \State Sample \(k<N\); compute \(\tilde{\mathbf c}_{k,t}^{\mathrm{VJP}}\)
        by next-chunk replay using~\eqref{eq:future-credit-target}.
        \State Project and scale-match to obtain \(c_{k,t}^{\mathrm{tgt}}\).
        \State Update the joint estimator with
        \(\mathcal{L}_{\Delta}+\mathcal{L}_{\mathrm{FC}}\), keeping \(\theta\) fixed.
    \Else \Comment{Generator update}
        \State Obtain \(\widehat r_{k,t},\widehat c_{k,t}\) for all chunks
        in one frozen-estimator forward pass.
        \State Set \(\widehat c_{N,t}=0\);
        \(b_{k,t}\gets\widehat r_{k,t}+\lambda\widehat c_{k,t}\).
        \State Scale-match to obtain \(\widehat{\mathbf g}_t\);
        update \(\theta\) with \(\mathcal{L}_G\) in~\eqref{eq:ar-update-generator}.
    \EndIf
\EndWhile
\end{algorithmic}
\end{algorithm}

\paragraph{Training procedure.}
Algorithm~\ref{alg:leta-training} summarizes the alternating updates: the
joint estimator learns both fields from teacher-derived targets and then
supplies their combined direction for generator updates. Teacher evaluations
and credit VJPs are used only during estimator training.




\Needspace{8\baselineskip}

\section{Experiment}
We perform final-stage on-policy few-step distillation using the
CD-initialized causal Wan2.1-T2V-1.3B model provided by Causal
Forcing++~\citep{zhao2026causal}. Our controlled training comparisons include
DMD, \textsc{ScoreDelta}, and Leta. Unless otherwise specified, these runs
alternate one generator update with four estimator updates (\(G:E=1:4\)),
and use a learning rate of
\(1\times10^{-6}\) for the generator and \(2\times10^{-6}\) for the estimator,
with a batch size of 16 on 16 NVIDIA H100 GPUs.
Following the Self-Forcing setting~\citep{huang2026self}, each rollout
consists of \(N{=}7\) chunks of 3 latent frames each (21 latent frames in
total), with four denoising steps for each generated chunk; all other
generation, rollout, and training-data details follow the public
Self-Forcing codebase.
Unless otherwise specified, the auxiliary estimator is initialized from the
bidirectional multi-step Wan2.1-T2V-1.3B model~\citep{wan2025wan}; when a
teacher score or clean prediction is required, the same pretrained model is
kept frozen and used as the teacher. Our standard 1:4 runs use 1,200 total
optimizer updates; the 1:1 ablation uses 600 total updates.
Our trained variants share the same causal
initialization, training data, self-forcing rollout procedure, and
sampling-step configuration. RAVEN~\citep{lu2026ravenrealtimeautoregressivevideo}
and DistillAlign~\citep{li2026distillaligncoordinatingmodecovering} are evaluated
using their publicly released checkpoints.

\paragraph{Evaluation protocol.}
We evaluate all methods on VBench~\citep{huang2023vbench}
using five shared, fixed generation seeds and report the averaged
results. We additionally evaluate the
generated videos with VideoAlign~\citep{liu2026improving}, a VLM-based metric
for visual quality, motion quality, and text alignment.
Using the same seeds for all controlled variants yields paired generations,
so the reported values are paired averages and are less affected by variation
in the sampled noise.

\begin{figure*}[t]
\centering
\includegraphics[width=0.9\textwidth]{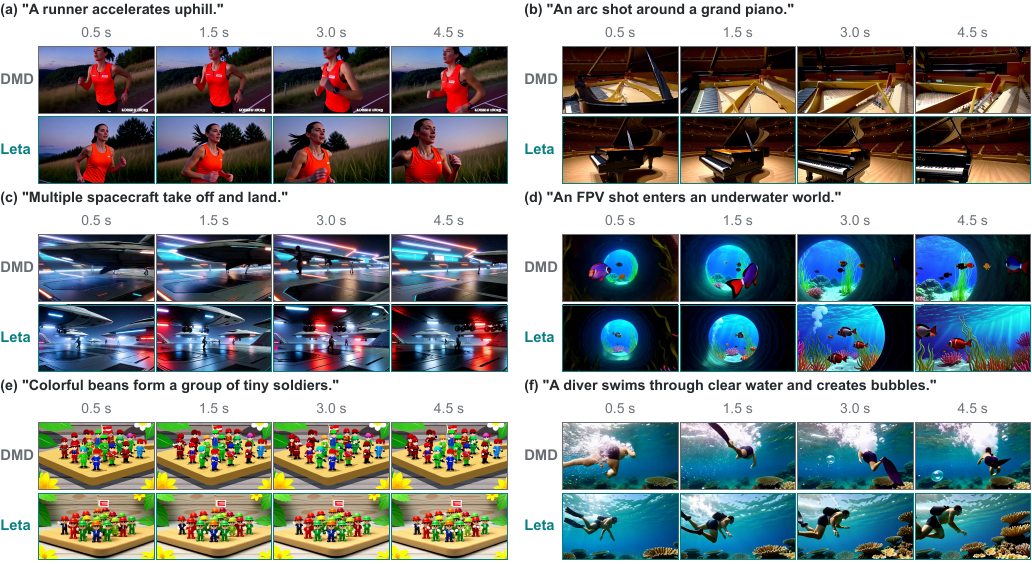}
\caption{DMD and Leta samples at matched timestamps. Each prompt is shown
above its corresponding pair.}
\label{fig:leta-qualitative-comparison}
\end{figure*}

\begin{table}[tb]
\centering
\caption{Comparison on VBench and VideoAlign. VQ, MQ, and TA denote visual
quality, motion quality, and text alignment.}
\label{tab:overall_eval}
\parbox{\columnwidth}{\footnotesize
\(\dagger\) Two-stage training: 300 steps of DMD with a Wan-14B teacher,
followed by 900 steps of Leta with a Wan-1.3B teacher.}
\par\vspace{4pt}
\small
\setlength{\tabcolsep}{3.5pt}
\resizebox{\columnwidth}{!}{%
\begin{tabular}{llccccccc}
\toprule
& & \multicolumn{3}{c}{VBench} & \multicolumn{4}{c}{VideoAlign} \\
\cmidrule(lr){3-5}\cmidrule(lr){6-9}
Model & \shortstack{Teacher\\model}
& Quality & Semantic & Overall
& VQ & MQ & TA & Overall \\
\midrule
DMD & Wan-14B
& 85.1 & 80.2 & 84.1
& 0.141 & 0.174 & 1.234 & 1.548 \\
DMD & Wan-1.3B
& 84.5 & 80.0 & 83.6
& 0.261 & 0.197 & 1.234 & 1.692 \\
RAVEN & Wan-14B
& 85.6 & 80.9 & 84.6
& 0.317 & 0.415 & 1.318 & 2.049 \\
DistillAlign & Wan-1.3B
& 85.5 & 80.7 & 84.6
& 0.329 & 0.216 & 1.231 & 1.776 \\
Leta & Wan-1.3B
& 85.6 & 80.2 & 84.5
& 0.604 & 0.579 & 1.165 & 2.348 \\
Leta & Wan-14B\(^{\dagger}\)
& 85.9 & 80.3 & 84.8
& 0.426 & 0.586 & 1.166 & 2.174 \\
\bottomrule
\end{tabular}%
}
\end{table}

\FloatBarrier

\paragraph{Main results.}
Table~\ref{tab:overall_eval} compares Leta with DMD and the released RAVEN
and DistillAlign checkpoints. Under the controlled setting, Leta improves
VBench Overall and VideoAlign visual and motion quality over DMD. Compared
with the released checkpoints, Leta achieves higher VideoAlign visual and
motion scores and competitive VBench performance, with gains concentrated in
appearance and motion rather than text alignment. The released-checkpoint
comparisons should be interpreted as reference comparisons because those
models are not retrained under our protocol.
The paired samples in Figure~\ref{fig:leta-qualitative-comparison} also show
better preservation of object structure, scene composition, and subject
consistency.

\begin{table}
\centering
\vspace{-2pt}
\caption{Ablations of Leta components and generator-to-estimator update ratio.
The upper block uses the standard \(G{:}E=1{:}4\); the lower block
uses \(G{:}E=1{:}1\) with 600 total optimizer updates.}
\label{tab:ablations}
\small
\setlength{\tabcolsep}{5pt}
\makebox[\columnwidth][l]{%
\resizebox{\columnwidth}{!}{%
\begin{tabular}{lccccccc}
\toprule
& \multicolumn{3}{c}{VBench} & \multicolumn{4}{c}{VideoAlign} \\
\cmidrule(lr){2-4}\cmidrule(lr){5-8}
Model
& Quality & Semantic & Overall
& VQ & MQ & TA & Overall \\
\midrule
\multicolumn{8}{l}{\textit{Component ablation (\(G{:}E=1{:}4\))}} \\
\midrule
DMD
& 84.5 & 80.0 & 83.6
& 0.261 & 0.197 & 1.234 & 1.692 \\
\textsc{ScoreDelta}
& 85.2 & 79.7 & 84.1
& 0.511 & 0.445 & 1.168 & 2.124 \\
Leta
& 85.6 & 80.2 & 84.5
& 0.604 & 0.579 & 1.165 & 2.348 \\
\midrule
\multicolumn{8}{l}{\textit{Higher generator update frequency (\(G{:}E=1{:}1\))}} \\
\midrule
DMD (1:1)
& 83.3 & 79.7 & 82.6
& 0.302 & 0.125 & 1.263 & 1.689 \\
\textsc{ScoreDelta} (1:1)
& 85.1 & 79.9 & 84.1
& 0.352 & 0.408 & 1.191 & 1.951 \\
Leta (1:1)
& 85.3 & 79.9 & 84.2
& 0.434 & 0.437 & 1.173 & 2.043 \\
\bottomrule
\end{tabular}%
}%
\vspace{-2pt}
}
\end{table}

\begin{table}
\centering
\caption{Effect of the future-credit weight $\lambda$.}
\label{tab:ablation_future_credit_weight}
\small
\setlength{\tabcolsep}{5pt}
\resizebox{\columnwidth}{!}{%
\begin{tabular}{lccccccc}
\toprule
& \multicolumn{3}{c}{VBench} & \multicolumn{4}{c}{VideoAlign} \\
\cmidrule(lr){2-4}\cmidrule(lr){5-8}
Model
& Quality & Semantic & Overall
& VQ & MQ & TA & Overall \\
\midrule
ScoreDelta
& 85.2 & 79.7 & 84.1
& 0.511 & 0.445 & 1.168 & 2.124 \\
\midrule
Leta ($\lambda=0$)
& 85.2 & 79.3 & 84.0 & 0.482 & 0.546 & 1.160 & 2.188 \\
Leta ($\lambda=0.25$)
& 85.6 & 80.2 & 84.5 & 0.604 & 0.579 & 1.165 & 2.348 \\
Leta ($\lambda=0.5$)
& 85.2 & 79.9 & 84.1 & 0.498 & 0.538 & 1.191 & 2.228 \\
Leta ($\lambda=1$)
& 85.7 & 79.1 & 84.4 & 0.488 & 0.476 & 1.176 & 2.139 \\
\bottomrule
\end{tabular}%
}
\vspace{-2pt}
\end{table}

\paragraph{Ablations.}
Table~\ref{tab:ablations} evaluates the contributions of direct residual
estimation and future credit. \textsc{ScoreDelta} already improves over DMD,
and adding \textsc{FutureCredit} yields further gains in VBench and VideoAlign.
The same ordering holds under the balanced \(G{:}E=1{:}1\) schedule.

\paragraph{Effect of future-credit weight.}
Table~\ref{tab:ablation_future_credit_weight} studies the future-credit weight
\(\lambda\). A moderate value (\(\lambda=0.25\)) performs best, while removing
future credit yields performance close to \textsc{ScoreDelta} and larger weights
can degrade the score.

\section{Conclusion}

We presented Leta, a distillation framework for few-step AR video
generation that combines \textsc{ScoreDelta}'s direct local correction with
\textsc{FutureCredit}'s cross-chunk credit in a shared estimator.
Experiments show improved visual and motion quality over DMD across
evaluated settings. Nevertheless, current AR video generation lacks
explicit planning for future motion and scene evolution, which may
lead to unnatural dynamics in complex scenes and error accumulation
over long horizons.

{\small
\bibliographystyle{ieeenat_fullname}
\bibliography{main}
}

\clearpage
\appendix

\section{Population Equivalence of Residual and Score Estimation}
\label{app:population-equivalence}

Throughout this appendix, \(t\) denotes the Gaussian forward-corruption
noise level of the complete rollout. It is distinct from the autoregressive
chunk index and the generator's internal solver-step index.
Fix the student distribution and condition on \((t,c)\). We suppress these arguments.
Write \(a=\alpha_t>0\), \(s=\sigma_t>0\),
\(\mathbf{z}\sim P_G(\cdot\mid c)\), and
\(\mathbf{x}=a\mathbf{z}+s\boldsymbol{\epsilon}\), where
\(\boldsymbol{\epsilon}\sim\mathcal{N}(0,I)\) is independent of
\(\mathbf{z}\). Expectations are under student sampling unless stated
otherwise, and all regression targets are square-integrable.

\subsection{Proof of Proposition~\ref{prop:scoredelta-equivalence}}

\begin{proof}
Let \(Y=\Delta_t^{\mathrm{tgt}}=D_T(\mathbf{x})-\mathbf{z}\). Since the frozen teacher is a
deterministic function of the regression input,
\begin{equation}
\mathbb{E}[Y\mid\mathbf{x}]
=D_T(\mathbf{x})-\mathbb{E}[\mathbf{z}\mid\mathbf{x}]
=D_T(\mathbf{x})-D_G^*(\mathbf{x})=:R^*(\mathbf{x}).
\label{eq:optimal-student-denoiser}
\end{equation}
For any square-integrable predictor \(R\), conditioning on \(\mathbf{x}\)
eliminates the cross term and gives
\begin{equation}
\begin{aligned}
\mathbb{E}\|R(\mathbf{x})-Y\|_2^2
&=\mathbb{E}\|R(\mathbf{x})-R^*(\mathbf{x})\|_2^2+\mathcal{N}_t, \\
\mathcal{N}_t
&:=\mathbb{E}\|\mathbf{z}-D_G^*(\mathbf{x})\|_2^2.
\end{aligned}
\label{eq:residual-risk-decomposition}
\end{equation}
The second term is independent of \(R\). Therefore \(R^*\) is the unique
population minimizer up to sets of zero student probability. Individual
training targets are noisy; their conditional mean is the desired residual.

For either clean distribution \(P\), write its noised density as
\begin{equation}
p_t(\mathbf{x})
=\int \varphi_s(\mathbf{x}-a\mathbf{z})\,P(d\mathbf{z}),
\end{equation}
where \(\varphi_s\) is the density of \(\mathcal{N}(0,s^2I)\).
The Gaussian kernel permits differentiation under the integral, and
\(p_t(\mathbf{x})>0\). Thus
\begin{equation}
\begin{aligned}
\nabla_{\mathbf{x}}\log p_t(\mathbf{x})
&=\frac{\int -(\mathbf{x}-a\mathbf{z})s^{-2}
\varphi_s(\mathbf{x}-a\mathbf{z})\,P(d\mathbf{z})}
{p_t(\mathbf{x})} \\
&=\frac{a\,\mathbb{E}_{P}[\mathbf{z}\mid\mathbf{x}]-\mathbf{x}}{s^2}.
\end{aligned}
\label{eq:gaussian-denoiser-score}
\end{equation}
This is the Gaussian denoiser--score identity (Tweedie's formula;
\citealp{efron2011tweedie}). If \(D_T\) is the optimal teacher denoiser,
applying it to teacher and student cancels the common \(-\mathbf{x}/s^2\)
term. Hence
\begin{equation}
s_T(\mathbf{x})-s_G(\mathbf{x})
=\frac{a}{s^2}\bigl(D_T(\mathbf{x})-D_G^*(\mathbf{x})\bigr)
=\frac{a}{s^2}R^*(\mathbf{x}),
\end{equation}
which proves the proposition.
\end{proof}

\paragraph{Approximate frozen teacher.}
For any fixed teacher predictor, let
\(\widetilde{s}_T=(aD_T-\mathbf{x})/s^2\) denote the score field it
provides. The same conditional-mean argument gives
\begin{equation}
\frac{a}{s^2}R^*
=\widetilde{s}_T-s_G.
\label{eq:approximate-teacher-equivalence}
\end{equation}
Thus, at the population optimum, \textsc{ScoreDelta} and DMD using the
same teacher recover the same correction. Both inherit the teacher's error
relative to the true score \(s_T\).

\subsection{Equivalence of the local ScoreDelta gradient}
\label{app:local-gradient-equivalence}

Let \(J_\theta\) denote the Jacobian of \(\mathbf{z}\) with respect to
\(\theta\) retained by the generator computation graph, which may use
gradient truncation. Both methods must use the same graph. Holding the
sampled noise fixed gives
\(\partial\mathbf{x}_t/\partial\theta=\alpha_tJ_\theta\).
The DMD stop-gradient surrogate in the main text therefore has gradient
\begin{equation}
\begin{aligned}
\nabla_\theta\mathcal{L}_{G}^{\mathrm{DMD}}
&=\mathbb{E}\!\left[
\alpha_tJ_\theta^{\top}(s_G-s_T)\right] \\
&=-\mathbb{E}\!\left[
\frac{\alpha_t^2}{\sigma_t^2}J_\theta^{\top}R^*\right].
\end{aligned}
\label{eq:dmd-gradient-residual-form}
\end{equation}
With \(\lambda=0\), the \textsc{ScoreDelta} surrogate is
\(-\mathbb{E}[w_t\langle\operatorname{sg}[R^*],\mathbf{z}\rangle]\),
whose gradient is exactly the last expression when
\(w_t=\alpha_t^2/\sigma_t^2\). For an approximate teacher, replace
\(s_T\) by \(\widetilde{s}_T\) in both methods. Averaging over \((t,c)\)
preserves the equality when sampling and weights agree and the gradients
are integrable. This equivalence concerns the local update at the population
optimum before RMS matching; it does not include the added
\textsc{FutureCredit} term. Our experiments use
\(w_t=\alpha_t^2/\sigma_t^2\), together with the processed direction specified
in Appendix~\ref{app:future-credit-implementation}.

\section{A Conditional Advantage under Finite Estimator Capacity}
\label{app:finite-capacity}

\subsection{Proof of Proposition~\ref{prop:finite-capacity}}

\begin{proof}
We again fix \((t,c)\), suppress these arguments, and write
\(\mathbf{z}\). Dependence of the feature map, target matrices,
and risks on \((t,c)\) is implicit, while \(\kappa\) denotes the rank budget.
The proof has three steps.

\emph{Step 1: separate approximation error from target noise.}
The conditional-mean decomposition gives
\begin{equation}
\begin{aligned}
\mathcal{L}_{\mathrm{fake}}(f)
&=\mathbb{E}\|f-D_G^*\|_2^2+\mathcal{N}_t, \\
\mathcal{L}_{\Delta}(R)
&=\mathbb{E}\|R-R^*\|_2^2+\mathcal{N}_t,
\end{aligned}
\label{eq:common-noise-floor}
\end{equation}
with the same \(\mathcal{N}_t=\mathbb{E}\|\mathbf{z}-D_G^*(\mathbf{x})\|_2^2\)
in both losses. For the indirect residual, the pointwise identity
\begin{equation}
(D_T-f)-R^*=D_G^*-f
\label{eq:no-absolute-error-amplification}
\end{equation}
shows that its squared error is exactly the denoiser's approximation error.
Subtraction does not amplify absolute error. Hence both methods can be
compared on the same residual target after removing the common noise term.

\emph{Step 2: express finite capacity as matrix rank.}
The class \(\mathcal{F}_\kappa\) is implemented by \(M=VU^{\top}\), with
\(V\in\mathbb{R}^{d\times \kappa}\) and \(U\in\mathbb{R}^{m\times \kappa}\).
Both estimators therefore have a linear bottleneck of width \(\kappa\) and at most
\(\kappa(d+m)\) trainable parameters, using the same fixed features. Let
\(B=M-A\). Since \(\mathbb{E}[\mathbf{u}\mathbf{u}^{\top}]=I_m\),
\begin{equation}
\mathbb{E}\|B\mathbf{u}\|_2^2
=\operatorname{tr}(BB^{\top})
=\|B\|_F^2.
\label{eq:whitened-matrix-risk}
\end{equation}
Thus the two approximation problems have target matrices \(A_G\) and
\(A_\Delta=A_T-A_G\), respectively.

\emph{Step 3: compute the best rank-\(\kappa\) error.}
Let \(\Pi\) be the orthogonal projection onto the column space of \(M\).
Then \((I-\Pi)M=0\), and orthogonality gives
\begin{equation}
\begin{aligned}
\|M-A\|_F^2
&=\|M-\Pi A\|_F^2+\|(I-\Pi)A\|_F^2 \\
&\geq\operatorname{tr}(AA^{\top})-
\operatorname{tr}(\Pi AA^{\top})
\geq\sum_{j>\kappa}\sigma_j(A)^2.
\end{aligned}
\label{eq:rank-approximation-bound}
\end{equation}
For the last inequality, express \(\Pi\) in an eigenbasis of
\(AA^{\top}\). Its diagonal entries lie in \([0,1]\) and sum to at most
\(\kappa\); their weighted sum cannot exceed the sum of the largest \(\kappa\)
eigenvalues. Taking \(\Pi\) to project onto those eigenvectors and
\(M=\Pi A\) attains equality. Therefore
\begin{equation}
\inf_{\operatorname{rank}(M)\leq \kappa}\|M-A\|_F^2
=\sum_{j>\kappa}\sigma_j(A)^2.
\label{eq:optimal-rank-risk}
\end{equation}
Applying this formula to \(A_G\) and \(A_\Delta\) proves
\eqref{eq:capacity-risks}. If
\(\operatorname{rank}(A_\Delta)\leq \kappa<\operatorname{rank}(A_G)\), the
residual fits exactly, while at least one nonzero singular value of the
complete denoiser must be discarded. This proves the strict gap.
\end{proof}

The rank condition is sufficient, not necessary: the same advantage holds
whenever the residual has a smaller singular-value tail. Under an exact
teacher, converting both residual estimates to score differences multiplies
both squared errors by \(\alpha_t^2/\sigma_t^4\), preserving the comparison
at the fixed timestep.

\subsection{A Gaussian example}
\label{app:gaussian-capacity}

The following construction shows that the rank condition can hold for valid
teacher and student distributions under the paper's Gaussian corruption.
Let \(d\geq2\), \(0<\delta<v\), and let \(e_d\) be the last coordinate
vector. Choose clean distributions
\begin{equation}
\begin{aligned}
P_G&=\mathcal{N}(0,C_G), & C_G&=vI,\\
P_T&=\mathcal{N}(0,C_T), & C_T&=vI-\delta e_de_d^{\top}.
\end{aligned}
\label{eq:gaussian-clean-covariances}
\end{equation}
Both covariances are positive definite. The distributions share their first
\(d-1\) coordinates and differ only in the variance of the last coordinate.

Fix \(a=\alpha_t>0\), \(s=\sigma_t>0\), and write
\(v_G=a^2v+s^2\), \(v_T=a^2(v-\delta)+s^2\).
For student samples \(\mathbf{x}=a\mathbf{z}+s\boldsymbol{\epsilon}\),
the common input \(\mathbf{u}=\mathbf{x}/\sqrt{v_G}\) satisfies
\(\mathbf{u}\sim\mathcal{N}(0,I)\). Gaussian conditioning gives
\begin{equation}
D_G^*(\mathbf{x})=\frac{av}{v_G}\mathbf{x},
\qquad
D_T(\mathbf{x})=aC_T(a^2C_T+s^2I)^{-1}\mathbf{x}.
\end{equation}
In the shared coordinates \(\mathbf{u}\), the complete denoiser and residual
therefore have matrices
\begin{equation}
A_G=\frac{av}{\sqrt{v_G}}I,
\qquad
A_\Delta=-\frac{as^2\delta}{v_T\sqrt{v_G}}e_de_d^{\top}.
\label{eq:gaussian-residual-rank}
\end{equation}
Indeed, the first \(d-1\) denoiser coefficients cancel, while the last
coefficient of \(A_T-A_G\) is
\begin{equation}
\begin{aligned}
&\frac{a(v-\delta)\sqrt{v_G}}{v_T}
-\frac{av}{\sqrt{v_G}} \\
&\quad =
\frac{a\{(v-\delta)v_G-vv_T\}}{v_T\sqrt{v_G}} \\
&\quad =
-\frac{as^2\delta}{v_T\sqrt{v_G}} .
\end{aligned}
\end{equation}
Thus \(A_G\) has rank \(d\), whereas \(A_\Delta\) has rank \(1\).
All singular values of \(A_G\) equal \(av/\sqrt{v_G}\), so for every
integer \(1\leq \kappa<d\), Proposition~\ref{prop:finite-capacity} yields
\begin{equation}
\mathcal{E}_{\mathrm{ind}}^*(\kappa)
=(d-\kappa)\frac{a^2v^2}{v_G}>0,
\qquad
\mathcal{E}_{\Delta}^*(\kappa)=0.
\label{eq:gaussian-capacity-gap}
\end{equation}
The zero is an approximation error: both optimal training losses still
include the same positive denoising noise
\(\mathcal{N}_t=dvs^2/v_G\). When \(\kappa=d\), both approximation errors
vanish. The example isolates a finite-capacity benefit from cancelling
shared structure, without requiring a claim about optimization speed.

\subsection{What the capacity comparison assumes}
\label{app:capacity-scope}

Although the trainable estimators have the same capacity, their implied
student denoisers belong to different function classes:
\begin{equation}
\begin{aligned}
\widehat{D}_G&\in\mathcal{F}_\kappa
&&\text{(complete denoising)},\\
\widehat{D}_G&=D_T-R\in D_T-\mathcal{F}_\kappa
&&\text{(residual estimation)}.
\end{aligned}
\end{equation}
Here the teacher enters \textsc{ScoreDelta} through the regression targets;
\(D_T-R\) is the implied denoiser, not an extra teacher evaluation during
the generator update. If the complete denoiser instead uses the same teacher
skip parameterization \(D_T-R_\phi\), then
\begin{equation}
\|(D_T-R_\phi)-\mathbf{z}\|_2^2
=\|R_\phi-(D_T-\mathbf{z})\|_2^2.
\label{eq:teacher-skip-equivalence}
\end{equation}
These losses and their parameter gradients are identical for the same
samples and weights. The capacity advantage is therefore conditional on
the chosen function classes and the complexity of their targets; changing
the target's name alone does not create an advantage.

\Needspace{32\baselineskip}
\subsection{Nonlinear spiral illustration}
\label{app:spiral-example}

Figure~\ref{fig:spiral-score-difference} uses a fixed-density score
regression problem. For \((x,y)\in\mathbb{R}^2\), define the student
as a narrow Gaussian strip with a bounded density modulation:
\begin{equation}
\begin{aligned}
q_\rho(x,y)&\propto
\exp\!\left[-\frac{x^2}{2}-\frac{y^2}{2\eta^2}
            +\frac{\rho}{\omega}\cos(\omega x)\right],\\
&\qquad \eta=0.015,\quad \omega=12.
\end{aligned}
\label{eq:spiral-student-density}
\end{equation}
Let \(u=\Phi(x)\), where \(\Phi\) is the standard normal CDF, and set
\(r(u)=0.35+1.65u\), \(\theta(u)=\pi/2+4.5\pi u\),
\(A(x)=r(u)\cos\theta(u)\), and \(B(x)=r(u)\sin\theta(u)\).
The teacher is defined through a log-density ratio:
\begin{equation}
\begin{aligned}
p_\rho(x,y)&\propto q_\rho(x,y)\exp[F(x)+yB(x)],\\
F(x)&=\int_0^x A(v)\,dv,\\
\Delta(x,y)&=\nabla\log p_\rho-\nabla\log q_\rho
\\
&\qquad=\bigl(A(x)+yB'(x),\,B(x)\bigr).
\end{aligned}
\label{eq:spiral-score-difference}
\end{equation}
Bounded \(A,B\) make \(F(x)+yB(x)\) grow at most linearly, so Gaussian tails
ensure normalizability. Differentiating the log-density ratio gives the
stated difference. At \(y=0\), it traces a spiral; the small transverse
variation gives the output distribution thickness.

Both estimators use 51 identical fixed features: \(1,x,y/\eta\),
12 centers at each of two Gaussian-RBF bandwidths (24 RBFs total), and their
24 interactions with \(y/\eta\).
The 12 centers are equally spaced between the training inputs' 0.2\%
and 99.8\% quantiles in \(x\); bandwidths are 0.65 and 1.3 times their
spacing. Each two-dimensional readout thus has 102 trainable parameters.
The displayed paper setting fixes these 12 centers and \(\rho=0.6\); the
reported numbers below correspond to this figure configuration.
We regress exact \(s_G=\nabla\log q_\rho\) or \(\Delta\) on the same
8,192 training inputs. Affine terms are unpenalized; the remaining
features are projected off the affine subspace and standardized using
training data. Each route independently selects ridge strength from
\(\{0,10^{-10},10^{-8},10^{-6},10^{-4},10^{-2},1\}\) on 4,096 validation
inputs. Test NMSE is summed squared difference error divided by summed
squared true difference on 32,768 independent inputs.

At \(\rho=0.6\), the NMSEs across seeds 0--4 (mean and sample standard deviation)
are \(0.1264\pm0.1774\) for \(s_T-\widehat{s}_G\) and
\(0.00923\pm0.00938\) for direct prediction. The figure plots difference
vectors at 4,096 held-out inputs from seed 0, with matching colors and
common axis limits; full-test NMSEs are \(0.0720\) and \(0.00209\), respectively.
At \(\rho=0\), the full score is affine and the indirect route instead
attains numerical precision. The example therefore illustrates a
conditional representation effect in fixed-density regression, without
directly testing the rank constraint or the denoising training procedure.

\section{FutureCredit Implementation}
\label{app:future-credit-implementation}

\paragraph{Replay and supervision.}
Each update samples \(k\) uniformly from \(\{1,\ldots,N-1\}\).
Teacher-forced replay runs through chunk \(k+1\) under a causal mask,
using its saved noisy input \(q\), solver timestep \(\tau\), and detached
history \(h\) as a differentiable input. Here \(\tau\) differs from the
supervision noise level \(t\), and \(J_k^{\mathrm{rep}}=\partial
F_{\theta,k+1}/\partial h_k\) is evaluated at the generated history.
For the flow parameterization,
\(F_\theta(q,\tau,h,c)=q-\sigma_\tau v_\theta(q,\tau,h,c)\), so its
history Jacobian already includes \(-\sigma_\tau\). Generator parameters,
saved noisy inputs, timesteps, and the teacher residual are held fixed;
the VJP target is detached without constructing a second-order graph.
The teacher clean prediction includes the configured classifier-free guidance.
Unsampled chunks are masked out of the credit loss. Each loss is averaged
over its supervised latent elements before the two are summed equally.

\paragraph{Projection and scale matching.}
For a chunk with \(m\) latent frames, the projection subtracts the frame mean,
at each channel and spatial location:
\begingroup\small
\begin{equation}
\begin{aligned}
\left(Pv\right)_f
&=v_f-\frac{1}{m}\sum_{f'=1}^m v_{f'},
\qquad f=1,\ldots,m .
\end{aligned}
\end{equation}
\endgroup
For each sample and chunk, define
\begingroup\small
\begin{equation}
\begin{aligned}
\mathcal{M}(v;a)&=\beta(v,a)v,\\
\rho(v,a)&=
\frac{\operatorname{RMS}(a)}
{\max(\operatorname{RMS}(v),10^{-8})},\\
\beta(v,a)&=
\begin{cases}
1,&\operatorname{RMS}(a)=0,\\
\min\!\left(10,\rho(v,a)\right),
&\text{otherwise},
\end{cases}
\end{aligned}
\end{equation}
\endgroup
where RMS is over all latent elements of that chunk. The actual target and
generator direction are
\begingroup\small
\begin{equation}
\begin{aligned}
c_{k,t}^{\mathrm{tgt}}
&=\mathcal{M}\!\left(
P\tilde{\mathbf c}_{k,t}^{\mathrm{VJP}},
\Delta_{k,t}^{\mathrm{tgt}}\right),\\
\widehat{g}_{k,t}
&=\mathcal{M}\!\left(b_{k,t},\widehat{r}_{k,t}\right).
\end{aligned}
\end{equation}
\endgroup
Implementation-wise, the joint estimator uses two lightweight, zero-initialized
\(1\times1\times1\) readouts from a shared bidirectional representation;
the \(C_\psi\) readout is followed by \(P\). These readouts are optimized
together under the complementary estimator losses. The credit-loss weight is
\(1\), the default update weight is \(\lambda=0.25\), and
\(w_t=\alpha_t^2/\sigma_t^2\).
The \(\lambda=0\) ablation retains the auxiliary loss and its effect on
shared features.

\end{document}